\renewcommand{\E}{\mathbb{E}}
\renewcommand{\R}{\mathbb{R}}
\newcommand{\mc}[1]{\ensuremath{\mathcal{#1}}\xspace}
\newcommand{\mb}[1]{\ensuremath{\mathbf{#1}}\xspace}
\newcommand{\bX}{{\mb{X}}}
\newcommand{\bP}{{\mb{P}}}
\newtheorem{theorem}{Theorem}[section]
\newtheorem{lemma}[theorem]{Lemma}
\newtheorem{definition}[theorem]{Definition}
\title{Towards Fair and Calibrated Models}
\author{
    Anand Brahmbhatt\textsuperscript{\rm 1} \\ Google Research India \\ {\tt anandpareshb@google.com} \and
    Vipul Rathore \\ IIT Delhi \\ {\tt rathorevipul28@gmail.com} \and
    Mausam \\ IIT Delhi \\ {\tt mausam@iitd.ac.in} \and
    Parag Singla \\ IIT Delhi \\ {\tt parags@iitd.ac.in}
}
\title{My Publication Title --- Single Author}
\author {
    Author Name
}
\title{My Publication Title --- Multiple Authors}
\author {
    First Author Name\textsuperscript{\rm 1},
    Second Author Name\textsuperscript{\rm 2},
    Third Author Name\textsuperscript{\rm 1}
}
\newcommand\blfootnote[1]{%
  \begingroup
  \renewcommand\thefootnote{}\footnote{#1}%
  \addtocounter{footnote}{-1}%
  \endgroup
}
\begin{document}

\maketitle
\blfootnote{[1] The majority of the work was completed during association with IIT Delhi.}

\begin{abstract}
    Recent literature has seen a significant focus on building machine learning models with specific properties such as fairness, i.e., being non-biased with respect to a given set of attributes, calibration i.e., model confidence being aligned with its predictive accuracy, and explainability, i.e., ability to be understandable to humans. While there has been work focusing on each of these aspects individually, researchers have shied away from simultaneously addressing more than one of these dimensions. In this work, we address the problem of building models which are both fair and calibrated. We work with a specific definition of fairness, which closely matches [Biswas et. al. 2019], and has the nice property that Bayes optimal classifier has the maximum possible fairness under our definition. We show that an existing negative result towards achieving a fair and calibrated model [Kleinberg et. al. 2017] does not hold for our definition of fairness. Further, we show that ensuring group-wise calibration with respect to the sensitive attributes automatically results in a fair model under our definition. Using this result, we provide a first cut approach for achieving fair and calibrated models, via a simple post-processing technique based on temperature scaling. We then propose modifications of existing calibration losses to perform group-wise calibration, as a way of achieving fair and calibrated models in a variety of settings. Finally, we perform extensive experimentation of these techniques on a diverse benchmark of datasets, and present insights on the pareto-optimality of the resulting solutions.

\end{abstract}

\section{Introduction}\label{sec:introduction}
Neural models have been shown to provide impressive performance for a large class of applications, including those in computer vision, natural language processing (NLP), speech and reinforcement learning~\cite{goodfellow&al16,sutton&barto18}. It has been argued that for an end-to-end deployment in a real world setting, a machine learning model should have some desriable properties such as, interpretability, i.e., being understandable in their predictions, being fair, i.e., not having any bias with respect to the values of a given (protected) attribute value, and being calibrated, i.e., not making predictions which are over (under) confident. It has also been shown that by themselves, neural models, while being highly accurate, often lack these properties. As a result, several researchers have focused on building models which are interpretable~\cite{murdoch&al19}, fair~\cite{bias_fairness_survey} and calibrated~\cite{calibration_nn_survey}. But to the best of our knowledge, there is very limited work on addressing more than one of these properties simultaneously while still being accurate. Motivated by this observation, our focus in this work is on designing neural models which are both fair and calibrated.

Multiple definitions of fairness have been proposed in the literature~\cite{bias_fairness_survey}. For our current exposition, we work with a variation of the definition proposed by ~\cite{biswas}. We specifically choose this definition since it allows for building models which are both accurate and fair, as long as the predicted (aggregate) probability of the target variable does not deviate (significantly) from that observed in the data, conditioned on the sensitive attribute. Satisfaction of this definition implies that the model is not amplifying the unfairness already present in the real-world data. For calibration, we use the standard definition as existing in the literature~\cite{multiclass_calib}. 

As the main theoretical results of our paper, we show that a negative result shown by ~\cite{kleinberg} regarding joint optimization of fairness and calibration, does not hold in our setting, since our definition of fairness depends on the conditional data distribution, as opposed to others such as equalized odds~\cite{bias_fairness_survey}, which strive for an "absolute" notion of fairness independent of what is observed in the data. We show that calibration a model for each value of the sensitive attribute in fact implies our definition of fairness. Following this, we provide post-processing and train time techniques to trade-off between fairness, calibration and accuracy. 

As a post processing technique, we propose a variant of Temperature scaling~\cite{calibration_nn_survey} which we show can also achieve fairness under our definition. Calibration achieved by temperature scaling on the learned model does not hurt accuracy. Next, we start with the calibration losses proposed in literature~\cite{label_smoothing, focal_loss, dca, mdca, mmce}, and formulate their extensions so that model can be calibrated given each value of the protected attribute. As a {\em hybrid} approach, we first train a model trained using the fairness-calibration loss. We then apply our post-processing technique to improve calibration and fairness without disturbing the accuracy of this learned model.

Finally, we present a detailed analysis of all these techniques on a diverse benchmark of datasets. We also present some insights into finding pareto-optimal points and quantifying tradeoff between fairness, accuracy and calibration.

In Section \ref{sec:related_work}, we provide an overview of prior work in fairness and calibration. Following that, in section \ref{sec:theory} we discuss the definitions of calibration and unfairness used in this paper. We also provide justification for the use of these particular definitions and prove some goodness properties. Next, in section \ref{sec:joint_optimization_techniques} we talk about some techniques to jointly optimise for performance, fairness and calibration. Finally, in section \ref{sec:emperical_evaluations} we provide experimental results of these techniques on a wide range of real world datasets.

\section{Related Work}\label{sec:related_work}
\textbf{Definitions of Unfairness}: Over the years, many definitions of measuring unfairness in ML have been proposed. \cite{bias_fairness_survey} summarises all the definitions of unfairness found in literature. Furthermore, \cite{kleinberg} proves that more than two of these definitions cannot be simultaneously satisfied except in some fixed cases. None of these works specifically talk about \textit{amplification} in the bias already present in the data. \cite{biswas} defines two properties that a good definition must satisfy. They require fairness concept to be agonistic of prior probability shifts within groups and require it to hold true for a perfect classifier. They define \textit{Proportional Equality} definition which talks about amplification in the bias. \cite{zhao_men_shopping} also talks about Bias Amplification and provides a way to measure it in multi-class domain.\\
\textbf{Unfairness}: \cite{fairness_regularization} talks about the reasons of unfairness in ML models. It mentions underfitting as one of the reasons and explains that underfitted models make more predictions on prior distribution of the data. It lists other reasons for unfairness and goes on to describe a regularization approach to mitigate it. \cite{underestimation_bias_and_underfitting} further reinforces that underfitting causes unfairness in ML models and provides empirical results which depict the same. \cite{simplicity_bias} talks about extreme Simplicity Bias in Neural Networks, where the neural networks make predictions on simple features even if they are less predictive. \cite{feature_wise_bias_amp} talks about feature selection techniques to introduce less unfairness into neural models.\\
\textbf{Calibration methods for Neural Networks}: \cite{calibration_nn_survey} summarises different methods being used for calibration of neural networks. It talks about method of temperature scaling which is done at test time as being the state-of-the-art as it does not change the labels and hence affect the accuracy. \cite{multiclass_calib} extends this definition to multiclass setting and provides ways to calibrate multiclass models. Calibration at training time is a little tricky as binning schemes introduce non-differentiability and hence adding a loss term is difficult. \cite{label_smoothing, focal_loss, dca, mdca, mmce} design surrogate loss terms which can be optimized at train time to obtain better calibrated models.\\
\textbf{Unfairness using Calibration}: \cite{kleinberg} provides class-wise calibration as a definition of fairness. \cite{on_fairness_and_calib} proves that calibration is not compatible with equalized odds and goes on to relax equalized odds to make it compatible with calibration.\cite{local_calib} provides a kernel-based method to define calibration in a particular region locally. It also provides a post processing technique which they show improves group-wise calibration of the model.\\
\textbf{Other related works}: \cite{fair_rep_disentangle} provides a neural framework to de-correlate every feature from the sensitive feature so that any model trained on the de-correlated data satisfies demographic parity. \cite{why_clf_disc} argues that unfairness introduced by inadequate samples sizes or unmeasured predictive variables should be addressed through data collection. \cite{explainability_fair_ml} introduces a Shapley value paradigm to attribute the model's unfairness to individual input features. \cite{fairness_constraints_ssl} talks about how unlabeled data can be effectively used to obtain better fairness-accuracy trade-off.\\

\section{Theoretical Framework}\label{sec:theory}
\subsection{Background and Definitions}
We address the problem of supervised multi-class classification with two sensitive groups in this paper. The feature vector $\bX \in \mc{X}$, the label $Y \in \{1, \dots, K\}$ and the sensitive group $A \in \{0, 1\}$ are random variables where $(\bX, Y, A)$ follows the joint distribution $\mc{D}$. Our training and test datasets are sampled i.i.d from $\mc{D}$. We train a model $h : \mc{X} \rightarrow \R^K$ on the training dataset where ${\sf softmax}(h(\bX)) = \hat{\bP}$ where $\hat{P}_k$ is a random variable representing the probability of predicting class $k$. The prediction $\hat{Y}$ given $\bX$ is a random variable distributed as ${\sf Multinoulli}({\sf softmax}(h(\bX)))$.\\
\textbf{Calibration} : A model is said the be calibrated if $\hat{\bP}$ represents the true probability distribution over labels. It is defined as follows in \cite{multiclass_calib}.
\begin{definition}[Perfectly Calibrated Models]\label{def:calib}
    Model $h$ is perfectly calibrated on $(\bX, Y, A) \sim \mc{D}$ if $\forall \, k \in \{1, \dots, K\}$
    \begin{align}
    \label{eq:strong_calib_defn}
        \Pr[Y = k | \hat{\bP} = \bp] = p_k \qquad \forall \, \bp \in \Delta^K
    \end{align}
    Probability is taken over the joint distribution of $(Y, \hat{\bP})$. $\Delta^K$ represents the $K$-simplex.
\end{definition}
This is a strong notion of calibration. It implies the following weaker notion of calibration \cite{multiclass_calib}. Both definitions are equivalent for the binary classification problem.
\begin{definition}[Weakly calibrated models]
    Model $h$ is weakly calibrated on $(\bX, Y, A) \sim \mc{D}$ if $\forall \, p \in [0, 1]$
    \begin{align}
    \label{eq:weak_calib_defn}
        \Pr[Y = {\sf argmax}_k\hat{P}_k | \max_k\hat{P}_k = p] = p
    \end{align}
    Probability is taken over the joint distribution of $(Y, \hat{\bP})$. 
\end{definition}
In order to measure miscalibration, the notion of expected calibration error is defined \cite{calibration_nn_survey}.
\begin{definition}[Expected Calibration Error (ECE)]
    The expected calibration error of $h$ on $(\bX, Y, A) \sim \mc{D}$ is
    \begin{align}
    \label{eq:ece_defn}
        \E_{\hat{\bP}}\left[\left|\Pr[Y = {\sf argmax}_k\hat{P}_k | \max_k\hat{P}_k = p] - p\right|\right]
    \end{align}
\end{definition}
Since we only have finite sample access, and $\hat{P}$ is a continuous random variable, the expectation in \eqref{eq:ece_defn} cannot be computed. Hence, we approximate it by partitioning the unit interval into $M$ equi-width bins. Define $B_m$ as the set of indices of samples whose confidence score falls in $I_m = \left(\tfrac{m-1}{M}, \tfrac{m}{M}\right]$. Given a dataset $\{\bx^{(i)}, y^{(i)}, a^{(i)}\}_{i=1}^n \sim \mc{D}^n$ and a model $h$ such that ${\sf softmax}(h(\bx^{(i)})) = \hat{\bp}^{(i)}$, define $\hat{y}^{(i)} = {\sf argmax}_k\hat{p}_k^{(i)}$ and $\hat{p}^{(i)} = \max_{k}\hat{p}^{(i)}$. We define the average accuracy of $B_m$ as
\begin{align}
    {\sf acc}(B_m) = \tfrac{1}{|B_m|}\sum_{i \in B_m}\mathds{1}\{y^{(i)} = \hat{y}^{(i)}\}
\end{align}
We also define the average confidence of $B_m$ as
\begin{align}
    {\sf conf}(B_m) = \tfrac{1}{|B_m|}\sum_{i \in B_m}\hat{p}^{(i)}
\end{align}
Finally, the empirical estimator of ECE is defined as
\begin{align}
    {\sf ECE} = \sum_{m=1}^M\tfrac{|B_m|}{n}\left|{\sf acc}(B_m) - {\sf conf}(B_m)\right|
\end{align}
\textbf{Fairness}: There are a lot of competing definitions of fairness \cite{bias_fairness_survey}. In this paper we focus on the amplification of the unfairness already present in $\mc{D}$ which is introduced by the model. The unfairness already present in $\mc{D}$ can be measured by $\Pr[Y = k | A = 1]$ and $\Pr[Y = k | A = 0]$ for all $k \in \{1, \dots, K\}$. For example, for the classification task in Table \ref{tab:example}, the distribution is unfairly favours males over females while predicting doctors. This is unfairness already present in the distribution. Model 1 in Table \ref{tab:example} predicts the same fraction of males and females as doctors as the distribution. Thus, we say it does not \text{amplify} unfairness present in the distribution. We define the following notion of fairness.
\begin{definition}[Perfectly Fair Models]\label{def:fairness}
    Model $h$ is perfectly fair on $(\bX, Y, A) \sim \mc{D}$ if $\forall \, k \in \{1, \dots, K\}$
    \begin{align}
        \Pr[\hat{Y} = k | A = 1] = \Pr[Y = k | A = 1] \quad \text{and} \quad
        \Pr[\hat{Y} = k | A = 0] = \Pr[Y = k | A = 0]
    \end{align}
\end{definition}
The following lemma follows since any model $h$ such that $\hat{Y}\, |\, X, A \,\,{\buildrel d \over =}\,\,
Y \,|\, X, A$ satisfies the definition \ref{def:fairness}. This gives us a very desirable property for the definition of fairness.
\begin{lemma}
    A perfect classifier is a perfectly fair model.
\end{lemma}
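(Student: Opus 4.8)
The statement is essentially the marginalized form of the distributional identity flagged just before the lemma, so the plan is short. First I would pin down what ``perfect classifier'' means in the model of Section~\ref{sec:theory}: it is a model $h$ whose induced label law coincides with the ground-truth conditional law, i.e.\ $\hat{Y}\mid(\bX,A)\overset{d}{=}Y\mid(\bX,A)$, equivalently $\Pr[\hat{Y}=k\mid\bX,A]=\Pr[Y=k\mid\bX,A]$ almost surely for every $k\in\{1,\dots,K\}$. (In the paper's generative model $\hat{Y}$ is drawn from ${\sf softmax}(h(\bX))$ and is therefore conditionally independent of $A$ given $\bX$; ``perfect'' then just requires ${\sf softmax}(h(\bX))$ to equal the vector $(\Pr[Y=k\mid\bX,A])_{k}$, which is consistent whenever that vector is a function of $\bX$ alone --- in particular when $A$ is recorded as a coordinate of $\bX$.)

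Given this, the core step is a single application of the tower property, conditioning on $\bX$. Fix $k\in\{1,\dots,K\}$ and $a\in\{0,1\}$; then
\begin{align*}
\Pr[\hat{Y}=k\mid A=a]
&=\E\big[\Pr[\hat{Y}=k\mid\bX,A]\;\big|\;A=a\big]\\
&=\E\big[\Pr[Y=k\mid\bX,A]\;\big|\;A=a\big]
=\Pr[Y=k\mid A=a],
\end{align*}
where the middle equality is exactly the perfect-classifier identity and the outer two are the law of total probability. Taking $a=1$ and $a=0$ gives the two equalities demanded by Definition~\ref{def:fairness}, so $h$ is perfectly fair.

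There is essentially no hard step; the only care needed is bookkeeping. The conditioning on $A=a$ is meaningful only when $\Pr[A=a]>0$, which is implicit throughout the paper since every fairness and calibration quantity conditions on $A$. And one must invoke the ``perfect'' hypothesis in its conditional-on-$(\bX,A)$ form, not merely conditional-on-$\bX$, so that the intermediate conditional probabilities align before $\bX$ is integrated out. If instead one reads ``perfect classifier'' in the strongest sense $\hat{Y}=Y$ almost surely, the same chain of equalities goes through verbatim, since $\hat{Y}=Y$ a.s.\ forces $\Pr[\hat{Y}=k\mid\bX,A]=\Pr[Y=k\mid\bX,A]$.
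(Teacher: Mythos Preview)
Your proposal is correct and matches the paper's own justification: the paper does not give a displayed proof but simply remarks that the lemma follows because any model with $\hat{Y}\mid\bX,A\overset{d}{=}Y\mid\bX,A$ satisfies Definition~\ref{def:fairness}. You have spelled out exactly the marginalization (tower property over $\bX$ conditional on $A=a$) that makes this remark rigorous, so the approach is identical in substance, only more detailed.
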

To measure unfairness of a model, we use a variation of the \textit{Proportional Equality} definition proposed by \cite{biswas}. In probabilistic terms, the definition is as follows.
\begin{definition}[Proportional Equality]
    The proportional equality of a model $h$ on $(\bX, Y, A) \sim \mc{D}$ is
    \begin{align}
    \label{eq:pe}
        {\sf PE} = \underset{k \in \{1, \dots, K\}}{\max}\left\{\left|\tfrac{\Pr[Y = k| A = 1]}{\Pr[Y = k| A = 0]} - \tfrac{\Pr[\hat{Y} = k| A = 1]}{\Pr[\hat{Y} = k| A = 0]}\right|\right\}
    \end{align}
\end{definition}
For empirical evaluation, we approximate the first term in \eqref{eq:pe} using the true labels and group labels in the training dataset. We approximate the second term using the model predictions and group labels on the test dataset. In the example in Table \ref{tab:example}, Model 1 is perfectly fair and it's ${\sf PE} = 0$. On the other hand, Model 2 has a ${\sf PE}$-unfairness of $\max\{(0.8/0.7) - (0.15/0.2), (0.85/0.8) - (0.2/0.3)\} = 0.396$. Intuitively, {\sf PE}-unfairness penalises the amplification in the unfairness that the model introduces in proportion to the under-representation of the minority group.\\
To approximate the second term, if we use the soft confidence score of the $k^{th}$ class $(\hat{p}_k)$ then we call it \textit{stochastic proportional equality} and if we use the indicator of the prediction $(\mathds{1}\{{\sf argmax}_kp_k = k\})$ then we call it \textit{deterministic proportional equality}.\\
\textbf{Group-wise calibration}: \cite{kleinberg} introduces group-wise calibration as a way to define fairness. It is defined as follows
\begin{definition}[Group-wise Calibration]
    A model $h$ is group-wise calibrated on $(\bX, Y, A) \sim \mc{D}$ if $\forall \, a \in \{0, 1\}$ and $\forall \, k \in \{1, \dots, K\}$,
    \begin{align}\label{eqn:group_wise_calib_defn}
        \Pr[ Y = k | \hat{\bP} = \bp, A = a] = p_k \qquad \forall\, \bp \in \Delta^K
    \end{align}
    Probability is taken over joint distribution of $(A, Y, \hat{\bP})$. $\Delta^K$ represents the $K$-simplex.
\end{definition}
\subsection{Key Results}
In this section, we prove the key results of this paper. The following lemma follows trivially using the total probability theorem.
\begin{lemma}\label{lem:group_wise_calib_calib}
    A group-wise calibrated model is perfectly calibrated.
\end{lemma}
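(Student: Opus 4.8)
The plan is to recover the unconditional calibration identity by averaging the group-wise identity over the sensitive attribute $A$, weighted by the conditional law of $A$ given $\hat{\bP}$. Fix $k \in \{1,\dots,K\}$ and $\bp \in \Delta^K$. First I would apply the law of total probability, conditioning on $\{\hat{\bP} = \bp\}$ and splitting on the two values of $A$:
\[
\Pr[Y = k \mid \hat{\bP} = \bp] \;=\; \sum_{a \in \{0,1\}} \Pr[Y = k \mid \hat{\bP} = \bp,\, A = a]\,\Pr[A = a \mid \hat{\bP} = \bp].
\]
Next I would invoke the group-wise calibration hypothesis \eqref{eqn:group_wise_calib_defn}, which asserts that each inner conditional probability equals $p_k$, crucially with a value that does not depend on $a$. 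Pulling this constant out of the sum leaves $p_k \sum_{a \in \{0,1\}} \Pr[A = a \mid \hat{\bP} = \bp]$, and since $\{0,1\}$ is the full range of $A$ this sum is $1$. Hence $\Pr[Y = k \mid \hat{\bP} = \bp] = p_k$, which is precisely the condition in Definition \ref{def:calib}, and since $k$ and $\bp$ were arbitrary the model is perfectly calibrated.

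The one technical point worth flagging is that $\hat{\bP}$ is a continuous random variable, so $\{\hat{\bP} = \bp\}$ is generically a null event; the conditional probabilities above should therefore be interpreted as regular conditional probabilities (a disintegration of the joint law of $(A, Y, \hat{\bP})$ with respect to $\hat{\bP}$). With that reading, the total-probability decomposition and the cancellation argument remain valid for $\Pr[\hat{\bP} = \bp]$-almost every $\bp$, which is all the definition requires. I do not expect a genuine obstacle here — the claim is essentially a one-line marginalization — so the only real content is being explicit that the mixing weights $\Pr[A = a \mid \hat{\bP} = \bp]$ form a probability vector and that the group-wise calibrated value $p_k$ is common to both groups, which is exactly what lets the weights sum away.
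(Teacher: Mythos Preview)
Your proof is correct and matches the paper's approach: the paper states that the lemma ``follows trivially using the total probability theorem,'' which is exactly the decomposition you carry out. Your additional remark about interpreting the conditionals as regular conditional probabilities is a reasonable technical clarification but goes beyond what the paper deems necessary.
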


\citet{kleinberg} show that group-wise calibration and statistical parity cannot be simultaneously achieved unless base rates match (i.e. $\Pr[A = 0] = \Pr[A = 1]$). They also show that group-wise calibration and equalized-odds cannot be achieved simultaneously unless either the base rates match for the sensitive groups or the model predicts perfectly (i.e. it knows with certainty whether a particular example belongs to a particular class).\\
We show that group-wise calibration and Proportional Equality can be simultaneously achieved. We in fact show that group-wise calibration is a stronger condition than {\sf PE}-fairness in the following lemma.
\begin{lemma}\label{lem:group_wise_calib_fair}
    A group-wise calibrated model is perfectly fair.
\end{lemma}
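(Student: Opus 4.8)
The plan is to write both sides of the fairness identity as the \emph{same} conditional expectation, namely $\E[\hat{P}_k \mid A = a]$, and then conclude. Fix an arbitrary $a \in \{0,1\}$ and $k \in \{1,\dots,K\}$.

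First I would treat the model-prediction side. By construction, $\hat{Y}$ given $\bX$ is distributed as ${\sf Multinoulli}({\sf softmax}(h(\bX)))$, so $\hat{Y}$ depends on the pair $(\bX, A)$ only through $\hat{\bP} = {\sf softmax}(h(\bX))$; in particular, conditioned on $\hat{\bP} = \bp$ the prediction $\hat{Y}$ carries no further dependence on $A$, and $\Pr[\hat{Y} = k \mid \hat{\bP} = \bp, A = a] = p_k$. Applying the law of total probability over $\hat{\bP}$ within the event $\{A = a\}$, i.e. $\Pr[\hat{Y} = k \mid A = a] = \E\big[\Pr[\hat{Y} = k \mid \hat{\bP}, A = a] \,\big|\, A = a\big]$, gives $\Pr[\hat{Y} = k \mid A = a] = \E[\hat{P}_k \mid A = a]$.

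Next I would treat the data side. By the definition of group-wise calibration, $\Pr[Y = k \mid \hat{\bP} = \bp, A = a] = p_k$ for every $\bp \in \Delta^K$. The same total-probability step, $\Pr[Y = k \mid A = a] = \E\big[\Pr[Y = k \mid \hat{\bP}, A = a] \,\big|\, A = a\big]$, then yields $\Pr[Y = k \mid A = a] = \E[\hat{P}_k \mid A = a]$. Comparing the two identities gives $\Pr[\hat{Y} = k \mid A = a] = \Pr[Y = k \mid A = a]$, and since $a$ and $k$ were arbitrary this is exactly Definition \ref{def:fairness}.

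The only point requiring care is that $\hat{\bP}$ is a continuous random variable, so the events $\{\hat{\bP} = \bp\}$ are null and the displayed identities must be read in terms of regular conditional distributions, with the ``total probability'' steps understood as the tower property $\Pr[\cdot \mid A = a] = \E[\,\Pr[\cdot \mid \hat{\bP}, A = a] \mid A = a\,]$. I expect this measure-theoretic bookkeeping, together with the essentially definitional observation that sampling $\hat{Y}$ from $\hat{\bP}$ introduces no extra coupling with $A$, to be the only subtle part; the algebra itself is immediate once both sides are reduced to $\E[\hat{P}_k \mid A = a]$.
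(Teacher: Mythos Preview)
Your proposal is correct and follows essentially the same route as the paper: both arguments reduce each side of the fairness identity to the common quantity $\E[\hat{P}_k \mid A = a]$ via the tower property, using the Multinoulli construction for $\hat{Y}$ and the group-wise calibration assumption for $Y$. Your version is simply more explicit about the conditional-independence of $\hat{Y}$ and $A$ given $\hat{\bP}$ and about the measure-theoretic reading of conditioning on $\hat{\bP}$, but the underlying idea is identical.
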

\begin{proof}
    Let $h$ be group-wise calibrated and $a \in \{0, 1\}$. By definition,
    \begin{align}\label{eqn:pred_cond_prob}
        \Pr[\hat{Y} = k | A = a] = \E_{\hat{\bP}}\left[p_k | A = a\right]
    \end{align}
    Since $h$ is group-wise calibrated, taking conditional expectation with respect to $\hat{\bP}$ when $A = a$ in \eqref{eqn:group_wise_calib_defn},
    \begin{align}\label{eqn:group_wise_calib_expec}
        \E_{\hat{\bP}}\left[\Pr[Y = k | \hat{\bP} = \bp, A = a] | A = a\right] = \E_{\hat{\bP}}[p_k | A = a]
    \end{align}
    The LHS in \eqref{eqn:group_wise_calib_expec} is equal to $\Pr[Y = k | A = a]$ and the RHS in \eqref{eqn:group_wise_calib_expec} is equal to $\Pr[\hat{Y} = k | A = a]$ by \eqref{eqn:pred_cond_prob}. This completes the proof.
\end{proof}
Lemma \ref{lem:group_wise_calib_calib} and \ref{lem:group_wise_calib_fair} prove that achieving group-wise calibration can lead to joint optimization of fairness and calibration. In the next section, we try to build upon this insight to propose methods to achieve group-wise calibration whilst jointly optimising for accuracy.

\section{Joint Optimization Techniques}\label{sec:joint_optimization_techniques}

\subsection{Post-processing techniques}
\textbf{Dual Temperature Scaling} : Temperature scaling has been known to be a very effective method to calibrate neural networks ~\cite{calibration_nn_survey}. A \textit{single} temperature parameter $T > 0$ is tuned on the validation dataset to improve the calibration of confidence scores. Let $h(\bx) = \bz$, then the new confidence prediction is given by,
\begin{align}
    \hat{q}_k = {\sf softmax}(z_i/T) \qquad \forall \, k \in \{1, \dots, K\}
\end{align}
This temperature $T$ is optimized with respect to the cross entropy loss on the validation set. At $T = 1$, we recover the original predictions $\hat{p}_k$. As $T \rightarrow \infty$, $\hat{q}_k \rightarrow 1/K$ and as $T \rightarrow 0$, $\hat{\mathbf{q}}$ becomes a point mass at ${\sf argmax}_kz_k$. \\
We modify this method by fine-tuning a separate temperature parameter for each sensitive group. At inference time, we choose the temperature parameter based on the sensitive group of the example on which inference is being drawn.\\
One major advantage of this technique is that model prediction $({\sf argmax}_kq_k)$ does not change. Hence, the accuracy of the model remains unchanged after temperature scaling. This makes it an ideal post-processing algorithm.\\

\subsection{Train-time techniques}\label{sec:train_time_techniques}
Train time losses have been proposed in the literature to achieve better calibrated neural models. We reweigh these loss terms to give equal weights to minimization of these losses for both the sensitive groups. We classify these loss terms in two major parts.
\subsubsection{Linear loss functions}
These are loss functions of the form $l: \{1, \dots, K\} \times \R^K \rightarrow \R$. Thus, loss of each example can be computed independently from other examples. The loss over a batch $B$ is defined as ${\sf L}(B) := \tfrac{1}{|B|}\sum_{i \in B}l(y^{(i)}, h(\bx^{(i)}))$. We partition our training batch $B$ into sub-batches for each sensitive group ($B_a$ for $a \in \{0, 1\}$). We individually compute the loss on each of these sub-batches and then take a convex combination. Intuitively, we assign equal importance to minimization of this loss for both the sensitive groups and hence it should lead to group-wise calibration. We take a convex combination instead of adding the loss terms as we observe it gives better empirical results. We tune the parameter of the convex combination as a hyperparameter to the experiment. We define the group-wise loss as follows.
\begin{align}\label{eq:grp_loss_lin}
    {\sf L_g}(B) := (1-\rho){\sf L}(B_0) + \rho{\sf L}(B_1) \quad \rho \in [0, 1]
\end{align}
Here $\rho$ is the convex combination parameter. Taking $\rho=1/2$ assigns equal weight to loss on each group. Taking $\rho = \Pr[A = 1]$, the group-wise loss collapses into the loss insensitive to group labels. Hence, we perform a grid search between the two values.\\
Label smoothing ({\sf LS}) ~\cite{label_smoothing}, Focal loss({\sf FL})~\cite{focal_loss} and sample dependent focal loss({\sf FLSD})~\cite{focal_loss} are loss functions which are used in lieu of cross entropy loss to train better calibrated models. For these losses, we train directly using the group-wise loss. Difference between calibration and accuracy ({\sf DCA})~\cite{dca} and it's multidimensional variant({\sf MDCA})~\cite{mdca} are loss functions used in addition to the cross entropy loss to improve calibration. For these losses, we train with the following loss.
\begin{align}\label{eq:grp_loss_w_nll}
    {\sf NLL}(B) + \lambda {\sf L_g}(B)
\end{align}
The hyperparameter $\lambda$ determines the trade-off between optimizing for accuracy and optimizing for calibration.
\subsubsection{Pair-wise loss functions}
These are loss functions of the form $l: (\{1, \dots, K\} \times \R^K) \times (\{1, \dots, K\} \times \R^K) \rightarrow \R$. Loss is defined for a pair of examples. Thus, given a batch $B$ we define the loss as ${\sf L}(B) := \tfrac{1}{|B|^2}\sum_{i, j \in B}l(y^{(i)}, h(\bx^{(i)}), y^{(j)}, h(\bx^{(j)}))$. We extend this definition naturally to define our loss. Given two batches $B$ and $B'$,
\begin{align}
    {\sf L}(B, B') := \tfrac{1}{|B||B'|}\sum_{i \in B}\sum_{j \in B'}l(y^{(i)}, h(\bx^{(i)}), y^{(j)}, h(\bx^{(j)}))
\end{align}
We partition a batch $B$ based on it's sensitive group into $B_0$ and $B_1$. We define our group-wise loss term as follows.
\begin{align}\label{eq:grp_loss_pairwise}
    {\sf L_g}(B) := (1-\rho)^2{\sf L}(B_0, B_0) + \rho^2{\sf L}(B_1, B_1) + 2\rho(1-\rho){\sf L}(B_0, B_1) \qquad \rho \in [0, 1]
\end{align}
The interpretation of $\rho$ is the same as that for \eqref{eq:grp_loss_lin}. This can be seen by that fact that ${\sf L}(B, B') = {\sf L}(B', B)$.\\
Maximum mean calibration error ({\sf MMCE}) and it's weighted version ({\sf MMCE-W})~\cite{mmce} are such loss functions. They are trained along with the cross entropy term. Hence we use the loss term in \eqref{eq:grp_loss_w_nll} to train models using these losses.

\section{Empirical Evaluation}\label{sec:emperical_evaluations}
\subsection{Datasets}
We use a set of binary classification datasets to evaluate these techniques. The details of these datasets are as listed below.
\begin{itemize}
    \item [1.] \textbf{Adult}: UCI dataset where each entry represents of a person and the objective is to predict whether they earn ${\sf >50K}$ or ${\sf <=50K}$ annually. We use the ${\sf sex}$ of the person for the sensitive group.
    \item [2.] \textbf{Arrhythmia}: UCI dataset where given an example the target is to predict presence or absence of cardiac arrhythmia. The sensitive group is given by gender.
    \item [3.] \textbf{Communities and Crime}: UCI dataset where each example represents a community and the task is to predict whether the community has a violent crime rate in $70^{th}$ percentile of all communities. The sensitive group is given by whether the community is has a majority of white population.
    \item [4.] \textbf{Drug}: UCI dataset where the task is to classify weather a person is a drug consumer or not. The sensitive group is given by race.
    \item [5.] \textbf{Compas}: Criminal recidivism dataset where the task is to predict recidivism of a person based on history. The sensitive group is given by race.
    \item [6.] \textbf{German}: UCI dataset where the task is to classify good or bag credit for a person. The sensitive group is given by gender.
    \item [7.] \textbf{Lawschool}: UCI dataset where the target is to predict whether a person passed the bar exam. The sensitive group is given by gender.
\end{itemize}
Table \ref{tab:dataset_stats} contains statistical details of all these datasets. All the features in these datasets are categorical. Thus, they are converted to multi-hot encoding vectors and suitably hashed. We follow the pre-processing steps of \cite{ensuring_fairness_beyond_training_data} or \cite{robust_fairness_covariate_shift}. We divide each dataset into train, validation and test sets with ratio of $6: 1: 1$.

\begin{figure}
\begin{minipage}{0.49\linewidth}
\scriptsize
\centering
\captionof{table}{An example to explain the definition of unfairness. The classification task is to predict is the person is a doctor or a nurse and the sensitive attribute is the gender of the person.}
\label{tab:example}
\begin{tabular}{l|cc}
\toprule
 & $\Pr[Doc|Woman]$ & $\Pr[Doc|Man]$ \\ \midrule
Data distribution & 0.2 & 0.7 \\
Model 1 & 0.2 & 0.7 \\
Model 2 & 0.15 & 0.8 \\ \bottomrule
\end{tabular}%

\end{minipage}
\hfill\vline\hfill
\begin{minipage}{0.49\linewidth}
\scriptsize
\centering
\captionof{table}{Dataset statistics}
\label{tab:dataset_stats}
\begin{tabular}{lccccc}
\toprule
\textbf{Dataset} & \textbf{Size}& \textbf{d} & $\Pr$[A = 1] & $\Pr$[Y=1|A=0] & $\Pr$[Y=1|A=1] \\ \midrule
Adult & 2020& 97 & 0.74 & 0.25 & 0.59 \\
Arrhythmia & 452& 279 & 0.55 & 0.41 & 0.65 \\
Communities & 1994& 122 & 0.71 & 0.36 & 0.84 \\
Compas & 5278& 11 & 0.6 & 0.61 & 0.49 \\
Drug & 1885& 10 & 0.91 & 0.83 & 0.79 \\
German & 1000& 20 & 0.85 & 0.60 & 0.72 \\
Lawschool & 1823& 17 & 0.54 & 0.51 & 0.55 \\ \bottomrule
\end{tabular}
\end{minipage}
\end{figure}

\subsection{Training details}
Using the train-time methods described above, a $2$-layer perceptron is trained using a fixed learning rate of $1e-4$. It has $128$ and $64$ node hidden layers followed by a $2$ node output layer, all with {\sf Relu} activation, for binary classification. The confidence scores for both classes are computed by taking a {\sf softmax} over the output. We train this network for $500$ epochs for every configuration using the {\sf Adam} optimizer.\\
We compute the {\sf ECE} as well as the \textit{deterministic} and \textit{stochastic} {\sf PE} metrics along with the accuracy on the test set at each epoch of training. We also perform \textit{Dual Temperature Scaling} at each epoch of the training. We perform these experiments across $5$ random seeds and report the average metric values. All training is done sequentially on a single GPU.

\subsection{Empirical observations}

\subsubsection{Dual Temperature Scaling}
As dual temperature scaling only involves tuning two parameters, we implement it at each epoch of training. We fine-tune these temperatures using the cross entropy loss on the validation dataset. We use a learning rate of $1e-4$ and {\sf Adam} optimizer. We set the maximum number of epochs for fine-tuning as $500$ but we implement early stopping so that the fine-tuning stops once the {\sf ECE} starts increasing on the validation set.\\
Since dual temperature scaling does not change the prediction of the model, the accuracy and \textit{deterministic} {\sf PE}-fairness do not change. We thus report {\sf ECE} and \textit{stochastic} {\sf PE}-fairness on the test dataset. Table \ref{tab:t_scal} gives the percentage improvement that dual temperature scaling offers for these metrics when applied after training with cross-entropy loss. The percentage improvement of the best value after temperature scaling across epochs over the best value before temperature scaling across epochs is reported.\\
It can be observed that dual temperate scaling offers improvement in {\sf ECE} in all the datasets. It also improves the \textit{stochastic} {\sf PE}-fairness in $4$ of the $7$ datasets. The highest improvement in both these metrics can be observed on the Compas dataset. This is because the features of Compas dataset are less informative. The maximum accuracy achieved upon training on Compas dataset is $67\%$ whereas all the other datasets achieve accuracy of above $80\%$. Thus, it is harder to train on and exhibits more headroom to improve fairness and calibration. On the other hand, fairness worsens on Adult, Arrythmia and Drug datasets. We believe this is because model is well-calibrated and fair when trained with cross-entropy loss.

\subsubsection{Train-time techniques}\label{sec:train_time_tech}
We train our model using the losses described in Section \ref{sec:train_time_techniques}. The details of the hyperparameters $\{\rho, \lambda\}$ over which we search can be found in Appendix \ref{appendix:training_details}. In this section, we first discuss how effectively these losses can optimize purely for calibration and fairness. Then we give some insights about pareto-optimality achieved between fairness and calibration for a loss given a particular accuracy.\\

\noindent\textbf{Optimizing for fairness}: For each dataset and technique, we compute the average of minimum \textit{stochastic} {\sf PE}-fairness obtained over all seeds. We use this average {\sf PE}-fairness to determine which technique performs best on a particular dataset. Table \ref{tab:fair} reports the best technique on each dataset. We also report the average (across seeds) percentage improvement in \textit{stochastic} {\sf PE}-fairness and average percentage change in {\sf ECE} and Accuracy using this technique.\\
We can observe a trade-off in Table \ref{tab:fair}. \textit{Stochastic} {\sf PE}-fairness can be improved substantially by sacrificing some amount of {\sf ECE} and Accuracy. {\sf MMCE} and {\sf MMCE-W} perform best on most datasets. {\sf MDCA} and {\sf DCA} perform the best for German and Lawschool respectively. However they incur more loss of Accuracy and gain in {\sf ECE}. A similar analysis using \textit{deterministic} {\sf PE}-fairness is presented in Appendix \ref{appendix:deterministic_unfairness}.

\begin{figure}
\begin{minipage}{0.49\linewidth}
\scriptsize
\centering
\captionof{table}{Percentage improvement in \textit{stochastic} {\sf PE}-fairness and {\sf ECE} represented as \%fair and \%calib. respectively.}
\label{tab:t_scal}
\begin{tabular}{lrr}\toprule
\textbf{Dataset}     & \multicolumn{1}{l}{\textbf{\%fair}} & \multicolumn{1}{l}{\textbf{\%calib.}} \\ \midrule
Adult       & -0.89                                 & 0.88                                      \\
Arrhythmia  & -0.10                                  & 0.25                                      \\
Communities & 0.21                                  & 0.80                                       \\
Compas      & 0.57                                  & 2.29                                      \\
Drug        & -4.84                                 & 0.49                                      \\
German      & 0.41                                  & 0.33                                      \\
Lawschool   & 0.07                                  & 0.61                                      \\ \bottomrule
\end{tabular}
\end{minipage}
\hfill\vline\hfill
\begin{minipage}{0.49\linewidth}
\scriptsize
\centering
\captionof{table}{Percentage change in \textit{stochastic} {\sf PE}-fairness, {\sf ECE} and Accuracy represented as \%fair, \%calib. and \%acc. respectively when optimizing for fairness.}
\label{tab:fair}
\begin{tabular}{llrrr}\toprule
\textbf{Dataset} & \textbf{Best technique} & \textbf{\%fairness} & \textbf{\%calib.} & \textbf{\%acc.} \\ \midrule
Adult            & MMCE               & 97.94               & -14.96            & -3.69          \\
Arrhythmia       & MMCE-W             & 86.10                & -29.55            & -9.36          \\
Communities      & MMCE               & 87.66               & -25.68            & -2.54          \\
Compas           & MMCE               & 92.53               & -23.61            & -2.67          \\
Drug             & MMCE-W             & 82.12               & -4.66             & -7.81          \\
German           & MDCA               & 96.95               & -40.66            & -10.93         \\
Lawschool        & DCA                & 97.48               & -107.48           & -46.35         \\ \bottomrule
\end{tabular}
\end{minipage}
\end{figure}

\noindent\textbf{Optimizing for calibration}: We perform similar analysis as the previous section using {\sf ECE} instead of \textit{stochastic} {\sf PE}-fairness. Table \ref{tab:calib} reports the average (over seeds) percentage change in \textit{stochastic} {\sf PE}-fairness, {\sf ECE} and Accuracy when we optimize only for {\sf ECE}.\\
Table \ref{tab:calib} shows that optimizing solely for {\sf ECE} can help increase the accuracy but definitely hurts the fairness of the model. {\sf DCA} perform the best for $5$ datasets. It also improves the accuracy for Adult, Communities and German datasets (at the point of best {\sf ECE}). {\sf MMCE-W} also offers significant improvement in Accuracy for Arrythmia dataset along with improving {\sf ECE}.

\begin{figure}
\begin{minipage}{0.49\linewidth}
\scriptsize
\centering
\captionof{table}{Percentage change in \textit{stochastic} {\sf PE}-fairness, {\sf ECE} and Accuracy represented as \%fair, \%calib. and \%acc. respectively when optimizing for calibration.}
\label{tab:calib}
\begin{tabular}{llrrr}\toprule
\textbf{Dataset} & \textbf{Best technique} & \textbf{\%fair} & \textbf{\%calib.} & \textbf{\%acc.} \\ \midrule
Adult            & DCA                & -591.51         & 16.67             & 2.37            \\
Arrhythmia       & MMCE-W             & -578.42         & 17.60              & 3.87            \\
Communities      & DCA                & -2953.28        & 15.91             & 0.11            \\
Compas           & MMCE-W             & -870.73         & 25.60              & -1.93           \\
Drug             & DCA                & -7121.60         & 18.90              & -1.08           \\
German           & DCA                & -341.72         & 12.25             & 0.08            \\
Lawschool        & DCA                & -59.53          & 21.25             & -0.94           \\ \bottomrule
\end{tabular}
\end{minipage}
\hfill\vline\hfill
\begin{minipage}{0.49\linewidth}
\centering
\caption{Pareto-optimal curve: Adult: Accuracy in (83.14\%, 88.14\%)}
\label{fig:adult}
\includegraphics[width=\linewidth]{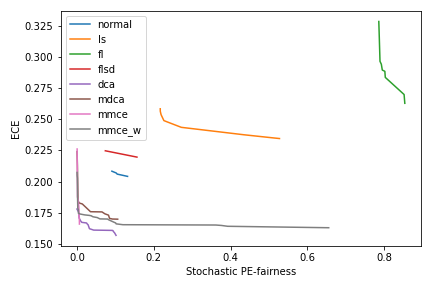}
\end{minipage}
\end{figure}

\begin{figure}
\begin{minipage}{0.49\linewidth}
\centering
\caption{Pareto-optimal curve: Communities: Accuracy in (86.2\%, 91.2\%)}
\label{fig:communities}
\includegraphics[width=\linewidth]{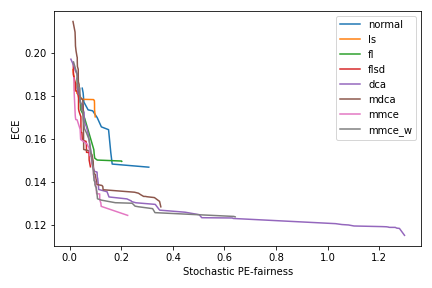}
\end{minipage}
\hfill\vline\hfill
\begin{minipage}{0.49\linewidth}
\centering
\caption{Pareto-optimal curve: Arrhythmia: Accuracy in (89.74\%, 94.74\%)}
\label{fig:arrhythmia}
\includegraphics[width=\linewidth]{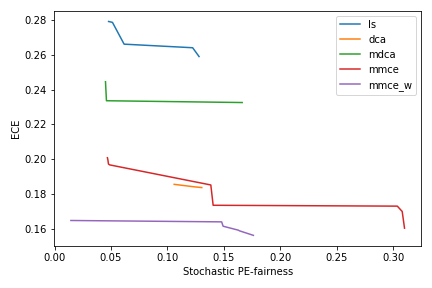}
\end{minipage}
\end{figure}

\begin{figure}
\begin{minipage}{0.49\linewidth}
\centering
\caption{Pareto-optimal curve: Compas: Accuracy in (63.37\%, 68.37\%)}
\label{fig:compas}
\includegraphics[width=\linewidth]{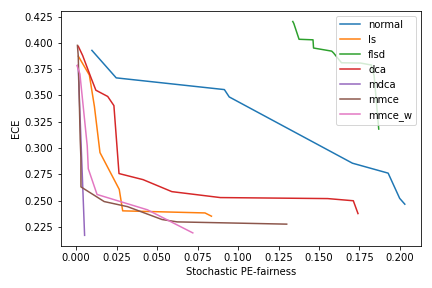}
\end{minipage}
\hfill\vline\hfill
\begin{minipage}{0.49\linewidth}
\centering
\caption{Pareto-optimal curve: Drug: Accuracy in (78.47\%, 83.47\%)}
\label{fig:drug}
\includegraphics[width=\linewidth]{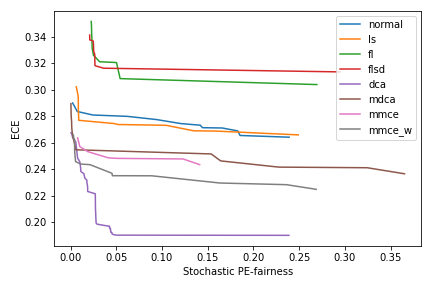}
\end{minipage}
\end{figure}

\begin{figure}
\begin{minipage}{0.49\linewidth}
\centering
\caption{Pareto-optimal curve: German: Accuracy in (81.4\%, 86.4\%)}
\label{fig:german}
\includegraphics[width=\linewidth]{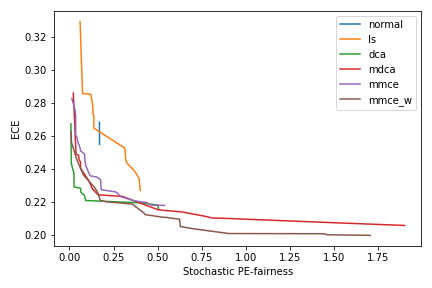}
\end{minipage}
\hfill\vline\hfill
\begin{minipage}{0.49\linewidth}
\centering
\caption{Pareto-optimal curve: Lawschool: Accuracy in (80.09\%, 85.09\%)}
\label{fig:lawschool}
\includegraphics[width=\linewidth]{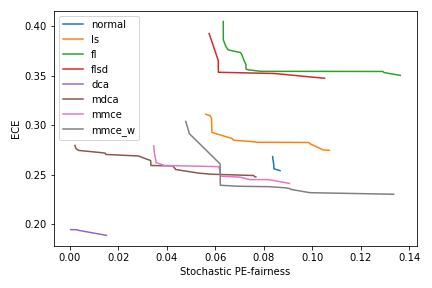}
\end{minipage}
\end{figure}

\noindent\textbf{Pareto-Optimality}: In this section, we try to give some insights of the fairness-calibration trade-off offered by different techniques. To perform this analysis, we first fix the slack in accuracy that we are willing to allow. Given this slack, for each loss we identify (\textit{stochastic} {\sf PE}-fairness, {\sf ECE}) pairs for which the loss in accuracy (from the best accuracy obtained over all train-time techniques) is less than the slack. For every dataset and train-time technique, we do this for all the hyperparameter settings and all seeds. Thus for every dataset and train-time technique, we get a collection of points $\{({\sf PE}_i, {\sf ECE}_i)\}_{i=1}^N$. Given such a collection of points, for each dataset and train-time technique we identify pareto-optimal points. Since lower values of \textit{stochastic} {\sf PE}-fairness and {\sf ECE} indicate fairer and more calibrated models respectively, we define a point $({\sf PE}, {\sf ECE})$ as pareto-optimal if
\begin{align}
    \nexists i \in \{1, \dots, N\} \text{ s.t. } {\sf PE}_i \leq {\sf PE} \text{ and } {\sf ECE}_i \leq {\sf ECE}
\end{align}
We compute this set of pareto-optimal points for each dataset and each train-time technique and plot them by interpolation. Figures \ref{fig:adult}, \ref{fig:arrhythmia}, \ref{fig:communities}, \ref{fig:compas}, \ref{fig:drug}, \ref{fig:german}, \ref{fig:lawschool} show these plots for all the datasets for a 5\% point slack in accuracy. Absence of a particular technique in a plot indicates that it does not ever achieve accuracy loss of less than 5\% points. The curves report absolute number of {\sf ECE} and \textit{stochastic} {\sf PE}-fairness.\\
In each of these plots, curves closer to the origin symbolize better fairness-calibration trade-off. It can be seen that at least one train-time technique always has better trade-off as compared to cross-entropy training. {\sf MMCE} and {\sf MMCE-W} have the best trade-off in most of the datasets. We also observe that {\sf FL} and {\sf FLSD} have a worse trade-off as compared to cross-entropy training on most datasets.

\textbf{Hybrid method}: Since dual temperature scaling is a post-processing technique, it can be applied in conjunction to any of the train-time techniques to further improve \textit{stochastic} {\sf PE}-fairness and {\sf ECE}. We present the same analysis as that in Section \ref{sec:train_time_tech} for this hybrid method in Appendix \ref{appendix:hybrid_method}\\

\section{Conclusion}
Our work shows that joint optimization of fairness and calibration is achievable when fairness is defined for ensuring that models do not amplify existing biases in the data. We propose post processing techniques and train time techniques for this joint optimization. We give a detailed analysis on performance of these techniques on a diverse benchmark of datasets. We believe that our work addresses significant gap in study of fairness and calibration together. In future work, other novel techniques to achieve group-wise calibration can be developed to aid this joint optimization.

\FloatBarrier

\bibliographystyle{plainnat}
\bibliography{references}

\begin{thebibliography}{26}
\providecommand{\natexlab}[1]{#1}
\providecommand{\url}[1]{\texttt{#1}}
\expandafter\ifx\csname urlstyle\endcsname\relax
  \providecommand{\doi}[1]{doi: #1}\else
  \providecommand{\doi}{doi: \begingroup \urlstyle{rm}\Url}\fi

\bibitem[Begley et~al.(2020)Begley, Schwedes, Frye, and Feige]{explainability_fair_ml}
Tom Begley, Tobias Schwedes, Christopher Frye, and Ilya Feige.
\newblock Explainability for fair machine learning.
\newblock \emph{arXiv preprint arXiv:2010.07389}, 2020.

\bibitem[Biswas and Mukherjee(2019)]{biswas}
Arpita Biswas and Suvam Mukherjee.
\newblock Fairness through the lens of proportional equality.
\newblock In \emph{Proceedings of the 18th International Conference on Autonomous Agents and MultiAgent Systems}, pages 1832--1834, 2019.

\bibitem[Chen et~al.(2018)Chen, Johansson, and Sontag]{why_clf_disc}
Irene Chen, Fredrik~D Johansson, and David Sontag.
\newblock Why is my classifier discriminatory?
\newblock \emph{Advances in neural information processing systems}, 31, 2018.

\bibitem[Creager et~al.(2019)Creager, Madras, Jacobsen, Weis, Swersky, Pitassi, and Zemel]{fair_rep_disentangle}
Elliot Creager, David Madras, J{\"o}rn-Henrik Jacobsen, Marissa Weis, Kevin Swersky, Toniann Pitassi, and Richard Zemel.
\newblock Flexibly fair representation learning by disentanglement.
\newblock In \emph{International conference on machine learning}, pages 1436--1445. PMLR, 2019.

\bibitem[Cunningham and Delany(2021)]{underestimation_bias_and_underfitting}
P{\'a}draig Cunningham and Sarah~Jane Delany.
\newblock Underestimation bias and underfitting in machine learning.
\newblock In \emph{Trustworthy AI-Integrating Learning, Optimization and Reasoning: First International Workshop, TAILOR 2020, Virtual Event, September 4--5, 2020, Revised Selected Papers 1}, pages 20--31. Springer, 2021.

\bibitem[Goodfellow et~al.(2016)Goodfellow, Bengio, and Courville]{goodfellow&al16}
Ian Goodfellow, Yoshua Bengio, and Aaron Courville.
\newblock \emph{Deep Learning}.
\newblock MIT Press, 2016.

\bibitem[Guo et~al.(2017)Guo, Pleiss, Sun, and Weinberger]{calibration_nn_survey}
Chuan Guo, Geoff Pleiss, Yu~Sun, and Kilian~Q Weinberger.
\newblock On calibration of modern neural networks.
\newblock In \emph{International conference on machine learning}, pages 1321--1330. PMLR, 2017.

\bibitem[Hebbalaguppe et~al.(2022)Hebbalaguppe, Prakash, Madan, and Arora]{mdca}
Ramya Hebbalaguppe, Jatin Prakash, Neelabh Madan, and Chetan Arora.
\newblock A stitch in time saves nine: A train-time regularizing loss for improved neural network calibration.
\newblock In \emph{Proceedings of the IEEE/CVF Conference on Computer Vision and Pattern Recognition}, pages 16081--16090, 2022.

\bibitem[Kamishima et~al.(2011)Kamishima, Akaho, and Sakuma]{fairness_regularization}
Toshihiro Kamishima, Shotaro Akaho, and Jun Sakuma.
\newblock Fairness-aware learning through regularization approach.
\newblock In \emph{2011 IEEE 11th International Conference on Data Mining Workshops}, pages 643--650. IEEE, 2011.

\bibitem[Kleinberg et~al.(2016)Kleinberg, Mullainathan, and Raghavan]{kleinberg}
Jon Kleinberg, Sendhil Mullainathan, and Manish Raghavan.
\newblock Inherent trade-offs in the fair determination of risk scores.
\newblock \emph{arXiv preprint arXiv:1609.05807}, 2016.

\bibitem[Kumar et~al.(2018)Kumar, Sarawagi, and Jain]{mmce}
Aviral Kumar, Sunita Sarawagi, and Ujjwal Jain.
\newblock Trainable calibration measures for neural networks from kernel mean embeddings.
\newblock In \emph{International Conference on Machine Learning}, pages 2805--2814. PMLR, 2018.

\bibitem[Leino et~al.(2018)Leino, Black, Fredrikson, Sen, and Datta]{feature_wise_bias_amp}
Klas Leino, Emily Black, Matt Fredrikson, Shayak Sen, and Anupam Datta.
\newblock Feature-wise bias amplification.
\newblock \emph{arXiv preprint arXiv:1812.08999}, 2018.

\bibitem[Liang et~al.(2020)Liang, Zhang, Wang, and Jacobs]{dca}
Gongbo Liang, Yu~Zhang, Xiaoqin Wang, and Nathan Jacobs.
\newblock Improved trainable calibration method for neural networks on medical imaging classification.
\newblock \emph{arXiv preprint arXiv:2009.04057}, 2020.

\bibitem[Luo et~al.(2021)Luo, Bhatnagar, Wang, Xiong, Savarese, Bai, Zhao, and Ermon]{local_calib}
Rachel Luo, Aadyot Bhatnagar, Huan Wang, Caiming Xiong, Silvio Savarese, Yu~Bai, Shengjia Zhao, and Stefano Ermon.
\newblock Localized calibration: metrics and recalibration.
\newblock \emph{arXiv preprint arXiv:2102.10809}, 2021.

\bibitem[Mandal et~al.(2020)Mandal, Deng, Jana, Wing, and Hsu]{ensuring_fairness_beyond_training_data}
Debmalya Mandal, Samuel Deng, Suman Jana, Jeannette Wing, and Daniel~J Hsu.
\newblock Ensuring fairness beyond the training data.
\newblock \emph{Advances in neural information processing systems}, 33:\penalty0 18445--18456, 2020.

\bibitem[Mehrabi et~al.(2019)Mehrabi, Morstatter, Saxena, Lerman, and Galstyan]{bias_fairness_survey}
Ninareh Mehrabi, Fred Morstatter, Nripsuta Saxena, Kristina Lerman, and Aram Galstyan.
\newblock A survey on bias and fairness in machine learning. arxiv 2019.
\newblock \emph{arXiv preprint arXiv:1908.09635}, 2019.

\bibitem[Mukhoti et~al.(2020)Mukhoti, Kulharia, Sanyal, Golodetz, Torr, and Dokania]{focal_loss}
Jishnu Mukhoti, Viveka Kulharia, Amartya Sanyal, Stuart Golodetz, Philip Torr, and Puneet Dokania.
\newblock Calibrating deep neural networks using focal loss.
\newblock \emph{Advances in Neural Information Processing Systems}, 33:\penalty0 15288--15299, 2020.

\bibitem[M{\"u}ller et~al.(2019)M{\"u}ller, Kornblith, and Hinton]{label_smoothing}
Rafael M{\"u}ller, Simon Kornblith, and Geoffrey~E Hinton.
\newblock When does label smoothing help?
\newblock \emph{Advances in neural information processing systems}, 32, 2019.

\bibitem[Murdoch et~al.(2019)Murdoch, Singh, Kumbier, Abbasi-Asl, and Yu]{murdoch&al19}
W.~James Murdoch, Chandan Singh, Karl Kumbier, Reza Abbasi-Asl, and Bin Yu.
\newblock Definitions, methods, and applications in interpretable machine learning.
\newblock \emph{Proceedings of the National Academcy of Sciences}, 116\penalty0 (44), 2019.

\bibitem[Pleiss et~al.(2017)Pleiss, Raghavan, Wu, Kleinberg, and Weinberger]{on_fairness_and_calib}
Geoff Pleiss, Manish Raghavan, Felix Wu, Jon Kleinberg, and Kilian~Q Weinberger.
\newblock On fairness and calibration.
\newblock In \emph{Proc. of Advances in neural information processing systems}, 2017.

\bibitem[Rezaei et~al.(2021)Rezaei, Liu, Memarrast, and Ziebart]{robust_fairness_covariate_shift}
Ashkan Rezaei, Anqi Liu, Omid Memarrast, and Brian~D Ziebart.
\newblock Robust fairness under covariate shift.
\newblock In \emph{Proceedings of the AAAI Conference on Artificial Intelligence}, volume~35, pages 9419--9427, 2021.

\bibitem[Shah et~al.(2020)Shah, Tamuly, Raghunathan, Jain, and Netrapalli]{simplicity_bias}
Harshay Shah, Kaustav Tamuly, Aditi Raghunathan, Prateek Jain, and Praneeth Netrapalli.
\newblock The pitfalls of simplicity bias in neural networks.
\newblock \emph{Advances in Neural Information Processing Systems}, 33:\penalty0 9573--9585, 2020.

\bibitem[Sutton and Barto(2018)]{sutton&barto18}
Richard~S. Sutton and Andrew~G. Barto.
\newblock \emph{Reinforcement Learning: An Introduction}.
\newblock MIT Press, 2018.

\bibitem[Widmann et~al.(2019)Widmann, Lindsten, and Zachariah]{multiclass_calib}
David Widmann, Fredrik Lindsten, and Dave Zachariah.
\newblock Calibration tests in multi-class classification: A unifying framework.
\newblock \emph{Advances in neural information processing systems}, 32, 2019.

\bibitem[Zhang et~al.(2020)Zhang, Zhu, Han, Li, Zhou, and Yu]{fairness_constraints_ssl}
Tao Zhang, Tianqing Zhu, Mengde Han, Jing Li, Wanlei Zhou, and Philip~S Yu.
\newblock Fairness constraints in semi-supervised learning.
\newblock \emph{arXiv preprint arXiv:2009.06190}, 2020.

\bibitem[Zhao et~al.(2017)Zhao, Wang, Yatskar, Ordonez, and Chang]{zhao_men_shopping}
Jieyu Zhao, Tianlu Wang, Mark Yatskar, Vicente Ordonez, and Kai-Wei Chang.
\newblock Men also like shopping: Reducing gender bias amplification using corpus-level constraints.
\newblock \emph{arXiv preprint arXiv:1707.09457}, 2017.

\end{thebibliography}

\clearpage
\appendix

\section{Appendix}
\subsection{Further details of experiments}\label{appendix:training_details}
In this section we discuss the losses used by our train-time techniques. We also give details of the hyperparameters used to train each of these models. The following are the details of the losses.
\begin{itemize}
    \item {\sf LS}: Label smoothing was proposed in \cite{label_smoothing} as a loss to train better calibrated models. Instead of computing the cross-entropy loss using the one-hot ground truth vector $\mathbf{q}$, we use a smoothed vector $\bs$ such that $s_i = (1 - \alpha)q_i + \alpha(1 - q_i)/(K-1)$. In our experiments, we use $\alpha = 0.05$.
    \item {\sf FL}: Focal loss mentioned in \cite{focal_loss} is used to train well-calibrated models. It is defined as follows
    \begin{align}
        {\sf FL}(y, \bp) = -(1 - p_y)^\gamma\log(p_y)
    \end{align}
    We use $\gamma = 3$ for our experiments.
    \item {\sf FLSD}: Sample dependent focal loss is also introduced in \cite{focal_loss}. It uses $\gamma = 5$ when $p_y \in [0, 0.2]$ and $\gamma = 3$ otherwise. We use the same for our experiments.
    \item {\sf DCA}: This is a loss used in conjunction with cross-entropy to improve calibration of trained models. On a set of labels and predictions $\{y^{(i)}, \bp^{(i)}\}_{i=1}^m$, it is defined as follows.
    \begin{align}
        \left|\tfrac{1}{m}\sum_{i=1}^m\mathds{1}\{y^{(i)} = {\sf argmax}_k p_k^{(i)}\} - \tfrac{1}{m}\sum_{i=1}^mp_{{\sf argmax}_k p_k^{(i)}}\right|
    \end{align}
    \item {\sf MDCA}: A variant of {\sf DCA} which performs better in multiclass settings proposed in \cite{mdca}. It is also trained after taking a linear combination with the cross-entropy loss. On a set of labels and predictions $\{y^{(i)}, \bp^{(i)}\}_{i=1}^m$, it is defined as follows.
    \begin{align}
        \frac{1}{K}\sum_{k=1}^K\left|\tfrac{1}{m}\sum_{i=1}^mp_k^{(i)} - \tfrac{1}{m}\sum_{i=1}^m\mathds{1}\{y^{(i)} = k\}\right|
    \end{align}
    \item {\sf MMCE}: A kernel-based pair-wise loss for calibration presented in \cite{mmce}. It is used in linear combination with cross-entropy loss.On a set of labels and predictions $\{y^{(i)}, \bp^{(i)}\}_{i=1}^m$, if we define $c^{(i)} := \mathds{1}\{y^{(i)} = {\sf argmax}_k p_k^{(i)}\}$ and $r^{(i)} := p_{{\sf argmax}_k p_k^{(i)}}$, then the square of the loss is defined as
    \begin{align}
        \sum_{i, j  = 1}^m\frac{(c^{(i)} - r^{(i)})(c^{(j)} - r^{(j)})k(r^{(i)}, r^{(j)})}{m^2}
    \end{align}
    We use a laplacian kernel with $\gamma=0.2$.
    \item {\sf MMCE-W}: This is an extension of the {\sf MMCE} loss proposed in \cite{mmce}. Is is also used in linear combination with cross-entropy loss. Let there be $m_1$ elements with $c^{(i)} = 1$ and $m_0$ elements with $c^{(i)} = 0$. The square of the loss is defined as
    \begin{align}
        \sum_{c^{(i)} = c^{(j)} = 0}\frac{r^{(i)}r^{(j)}k(r^{(i)}, r^{(j)})}{m_0^2} + \sum_{c^{(i)} = c^{(j)} = 1}\frac{(1 -r^{(i)})(1 -r^{(j)})k(r^{(i)}, r^{(j)})}{m_1^2} - 2\sum_{c^{(i)} = 1, c^{(j)} = 0}\frac{(1 -r^{(i)})r^{(j)}k(r^{(i)}, r^{(j)})}{m_0m_1}
    \end{align}
\end{itemize}
Further, for {\sf DCA}, {\sf MDCA}, {\sf MMCE} and {\sf MMCE-W}, we perform a grid search over $\lambda$ by taking its values in $\{0.2, 0.5, 1.0, 2.0, 3.0, 4.0, 5.0, 10.0, 20.0, 30.0, 40.0, 50.0\}$.\\
We choose the values of $\rho$ to perform a grid search based on the dataset. They are listed below.
\begin{itemize}
    \item Adult: $\{0.4, 0.45, 0.5, 0.55, 0.6, 0.65, 0.70, 0.75, 0.8\}$
    \item Arrhythmia: $\{0.4, 0.45, 0.5, 0.55, 0.6\}$
    \item Communities: $\{0.4, 0.45, 0.5, 0.55, 0.6, 0.65, 0.70, 0.75\}$
    \item Compas: $\{0.4, 0.45, 0.5, 0.55, 0.6, 0.65\}$
    \item Drug: $\{0.4, 0.45, 0.5, 0.55, 0.6, 0.65, 0.70,$
    \\$ 0.75, 0.8, 0.85, 0.9, 0.95\}$
    \item German: $\{0.4, 0.45, 0.5, 0.55, 0.6, 0.65,$\\
    $ 0.70, 0.75, 0.8, 0.85, 0.9\}$
    \item Lawschool: $\{0.4, 0.45, 0.5, 0.55, 0.6\}$
\end{itemize}
\subsection{Analysis for deterministic unfairness}\label{appendix:deterministic_unfairness}
We replicate the table \ref{tab:fair} with \textit{deterministic} {\sf PE}-fairness instead of \textit{stochastic} {\sf PE}-fairness in table \ref{tab:hard_fair}.

\subsection{Hybrid method analysis}\label{appendix:hybrid_method}
In this section we perform the same analysis as Section \ref{sec:train_time_tech} for \textit{hybrid} methods. Hybrid method involve applying \textit{dual temperature scaling} after any train-time technique. We replicate Table \ref{tab:fair} using the hybrid method in table \ref{tab:fair_hybrid}. We can notice that the best techniques do not change from that in Table \ref{tab:fair}. There is an improvement in the percentage gain in fairness in some cases.\\
Similarly, we replicate the Table \ref{tab:calib} using hybrid methods in table \ref{tab:calib_hybrid}. Again, the best techniques do not change from that in Table \ref{tab:calib}. We add the hybrid methods to Figures \ref{fig:adult}, \ref{fig:arrhythmia}, \ref{fig:communities}, \ref{fig:compas}, \ref{fig:drug}, \ref{fig:german}, \ref{fig:lawschool} and construct Figures \ref{fig:adult_hybrid}, \ref{fig:arrhythmia_hybrid}, \ref{fig:communities_hybrid}, \ref{fig:compas_hybrid}, \ref{fig:drug_hybrid}, \ref{fig:german_hybrid}, \ref{fig:lawschool_hybrid}. We can see that after dual temperature scaling, the pareto-optimal curve of the \textit{hybrid} technique closely follows the curve without temperature scaling. In most cases, it offers a better trade-off between fairness and accuracy as compared to the curve without temperature scaling. The \textit{hybrid} methods are indicated by '\_ts' suffix at the end of each technique.

\FloatBarrier

\begin{figure}
\begin{minipage}{0.49\linewidth}
\centering
\scriptsize
\captionof{table}{Percentage change in \textit{deterministic} {\sf PE}-fairness, {\sf ECE} and Accuracy represented as \%fair, \%calib. and \%acc. respectively when optimizing for fairness.}
\label{tab:hard_fair}
\begin{tabular}{lcccc} \toprule
\textbf{Dataset} & \textbf{Best technique} & \textbf{\%fair} & \textbf{\%calib.} & \textbf{\%acc.} \\ \midrule
Adult            & MMCE-W                  & 84.4            & -25.65            & -5.52           \\
Arrhythmia       & MMCE                    & 60.26           & -33.07            & -6.02           \\
Communities      & FL                      & 71.07           & -133.67           & -6.7            \\
Compas           & DCA                     & 73.31           & -26.65            & -12.2           \\
Drug             & MMCE                    & 92.33           & -57.51            & -19.27          \\
German           & FLSD                    & 92.2            & -30.58            & 1.11            \\
Lawschool        & MMCE                    & 75.32           & -26.86            & -21.08         \\ \bottomrule
\end{tabular}
\end{minipage}
\hfill\vline\hfill
\begin{minipage}{0.49\linewidth}
\centering
\scriptsize
\captionof{table}{Percentage change in \textit{stochastic} {sf PE}-fairness, {\sf ECE} and Accuracy represented as \%fair, \%calib. and \%acc. respectively when optimizing for fairness with \textit{hybrid} techniques.}
\label{tab:fair_hybrid}
\begin{tabular}{lcccc} \toprule
\textbf{Dataset} & \textbf{Best technique} & \textbf{\%fair} & \textbf{\%calib.} & \textbf{\%acc.} \\ \midrule
Adult            & MMCE                    & 96.28           & -18.39            & -3.22           \\
Arrhythmia       & MMCE-W                  & 86.25           & -29.69            & -9.36           \\
Communities      & MMCE                    & 87.39           & -26.27            & -2.54           \\
Compas           & MMCE                    & 91.47           & -28.41            & -2.5            \\
Drug             & MMCE                    & 94.53           & -5.43             & -4.17           \\
German           & MDCA                    & 97.14           & -41.24            & -11.33          \\
Lawschool        & DCA                     & 97.46           & -109.16           & -46.35          \\ \bottomrule
\end{tabular}
\end{minipage}
\end{figure}

\begin{figure}
\begin{minipage}{0.49\linewidth}
\centering
\scriptsize
\captionof{table}{Percentage change in \textit{stochastic} {\sf PE}-fairness, {\sf ECE} and Accuracy represented as \%fair, \%calib. and \%acc. respectively when optimizing for calibration using \textit{hybrid} techniques.}
\label{tab:calib_hybrid}
\begin{tabular}{lcccc} \toprule
\textbf{Dataset} & \textbf{Best technique} & \textbf{\%fair} & \textbf{\%calib.} & \textbf{\%acc.} \\ \midrule
Adult            & DCA                     & -612.64         & 16.14             & 2.37            \\
Arrhythmia       & MMCE-W                  & -543.06         & 19.01             & 5.44            \\
Communities      & DCA                     & -2956.63        & 15.46             & 0.29            \\
Compas           & MMCE                    & -386.65         & 24.86             & -2.78           \\
Drug             & DCA                     & -2928.73        & 18.66             & -1.29           \\
German           & DCA                     & -295.55         & 13.35             & 0.48            \\
Lawschool        & DCA                     & -56.61          & 20.89             & -0.73           \\ \bottomrule
\end{tabular}
\end{minipage}
\hfill\vline\hfill
\begin{minipage}{0.49\linewidth}
\centering
\caption{Pareto-optimal curve: Adult: Accuracy in (83.14\%, 88.14\%) with hybrid techniques}
\includegraphics[width=\linewidth]{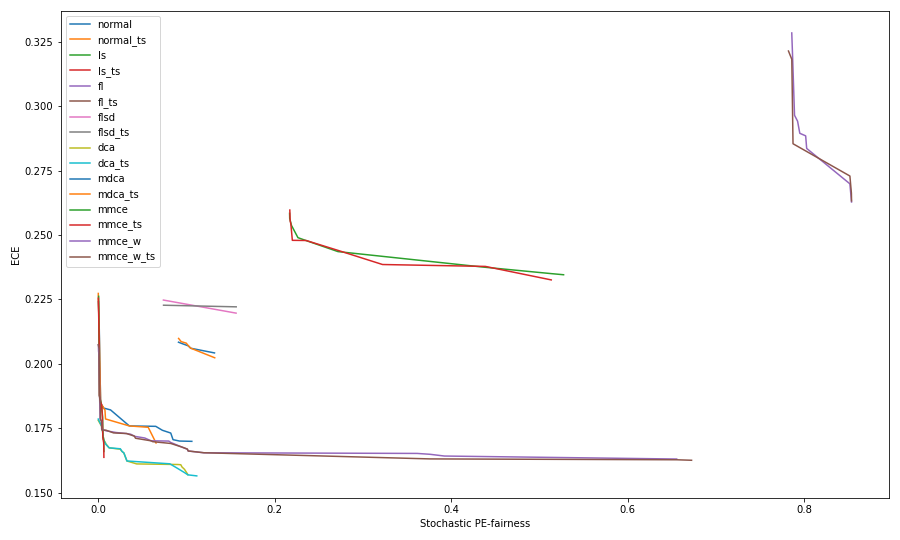}
\label{fig:adult_hybrid}
\end{minipage}
\end{figure}

\begin{figure}
\begin{minipage}{0.49\linewidth}
\centering
\caption{Pareto-optimal curve: Communities: Accuracy in (86.2\%, 91.2\%) with hybrid techniques}
\includegraphics[width=\linewidth]{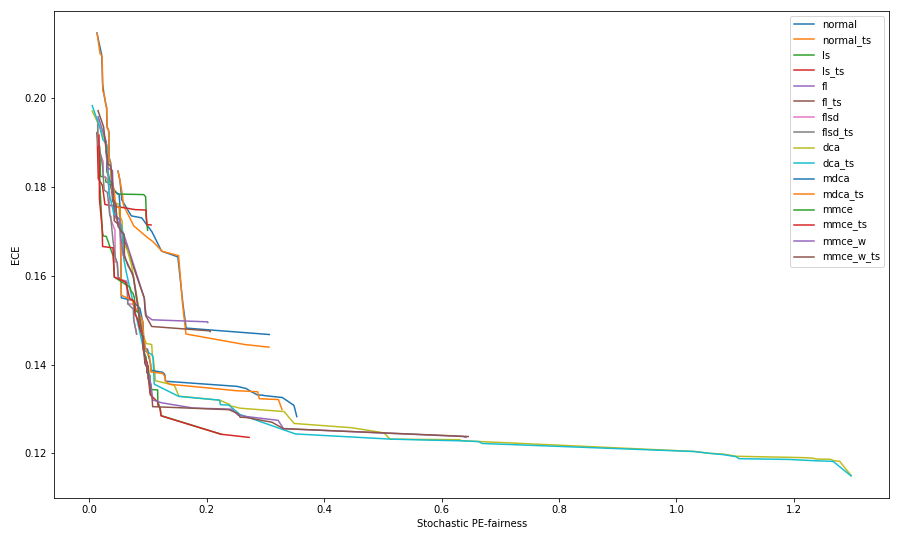}
\label{fig:communities_hybrid}
\end{minipage}
\hfill\vline\hfill
\begin{minipage}{0.49\linewidth}
\centering
\caption{Pareto-optimal curve: Arrhythmia: Accuracy in (89.74\%, 94.74\%) with hybrid techniques}
\includegraphics[width=\linewidth]{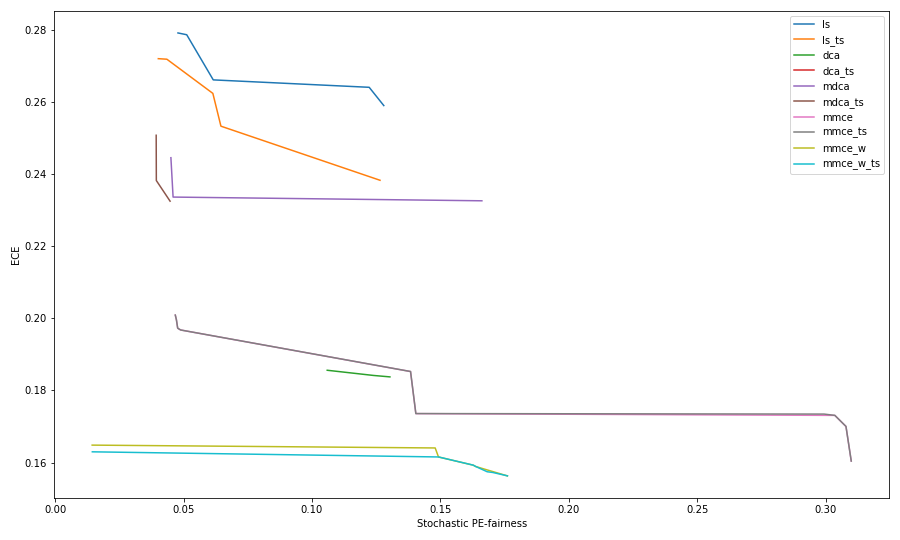}
\label{fig:arrhythmia_hybrid}
\end{minipage}
\end{figure}

\begin{figure}
\begin{minipage}{0.49\linewidth}
\centering
\caption{Pareto-optimal curve: Compas: Accuracy in (63.37\%, 68.37\%) with hybrid techniques}
\includegraphics[width=\linewidth]{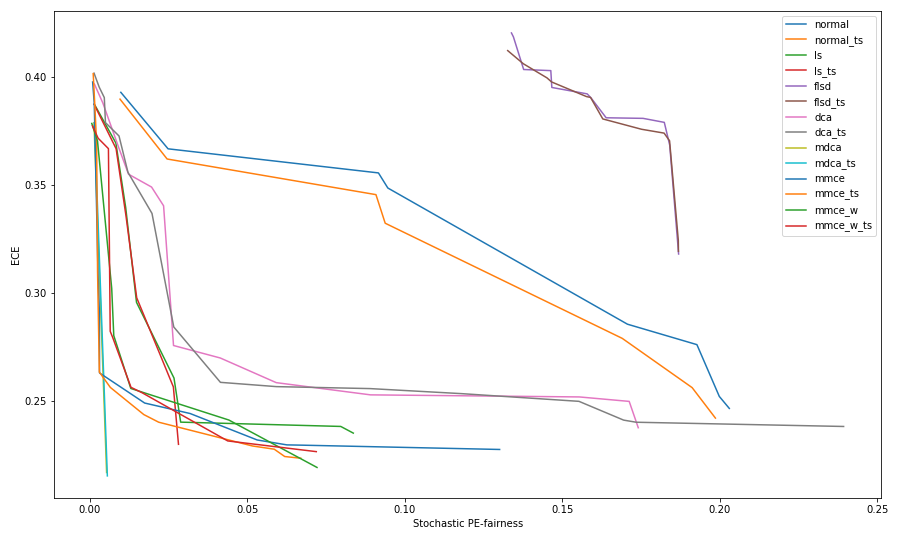}
\label{fig:compas_hybrid}
\end{minipage}
\hfill\vline\hfill
\begin{minipage}{0.49\linewidth}
\centering
\caption{Pareto-optimal curve: Drug: Accuracy in (78.47\%, 83.47\%) with hybrid techniques}
\includegraphics[width=\linewidth]{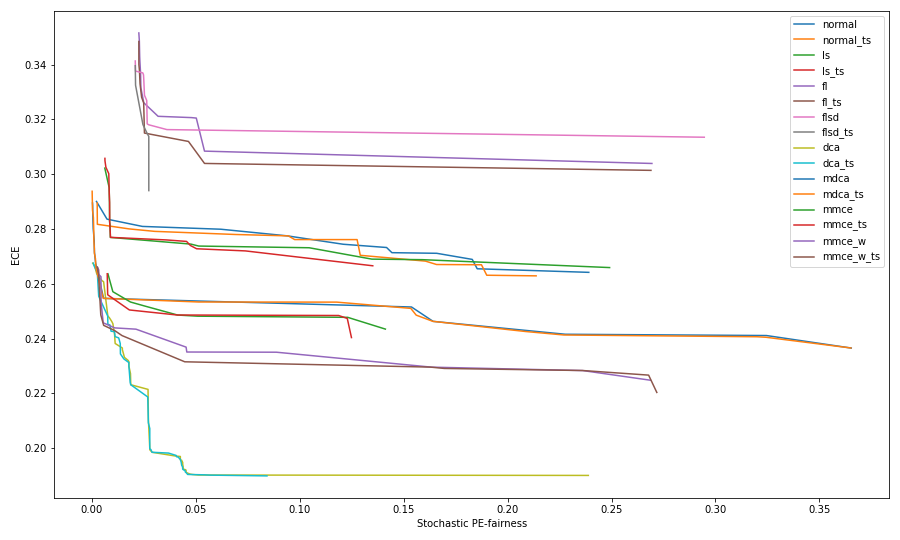}
\label{fig:drug_hybrid}
\end{minipage}
\end{figure}

\begin{figure}
\begin{minipage}{0.49\linewidth}
\centering
\caption{Pareto-optimal curve: German: Accuracy in (81.4\%, 86.4\%) with hybrid techniques}
\includegraphics[width=\linewidth]{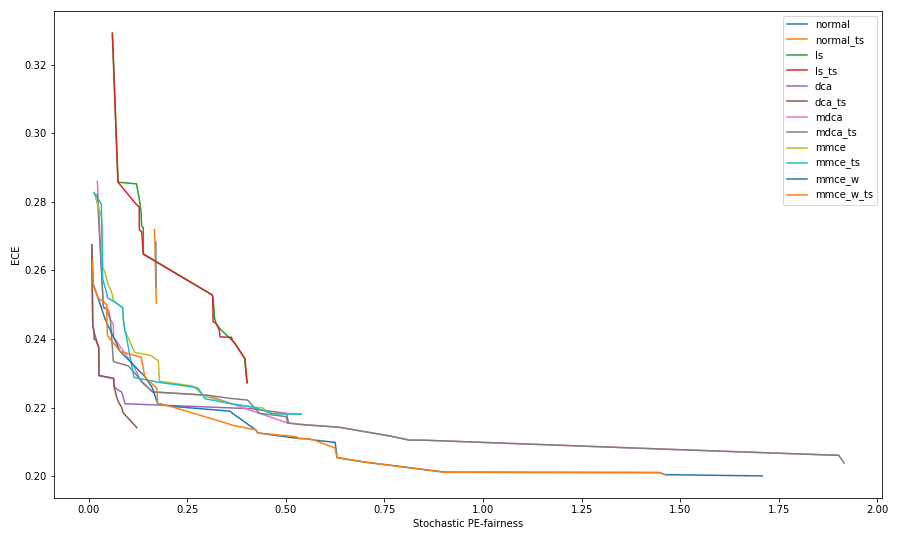}
\label{fig:german_hybrid}
\end{minipage}
\hfill\vline\hfill
\begin{minipage}{0.49\linewidth}
\centering
\caption{Pareto-optimal curve: Lawschool: Accuracy in (80.09\%, 85.09\%) with hybrid techniques}
\includegraphics[width=\linewidth]{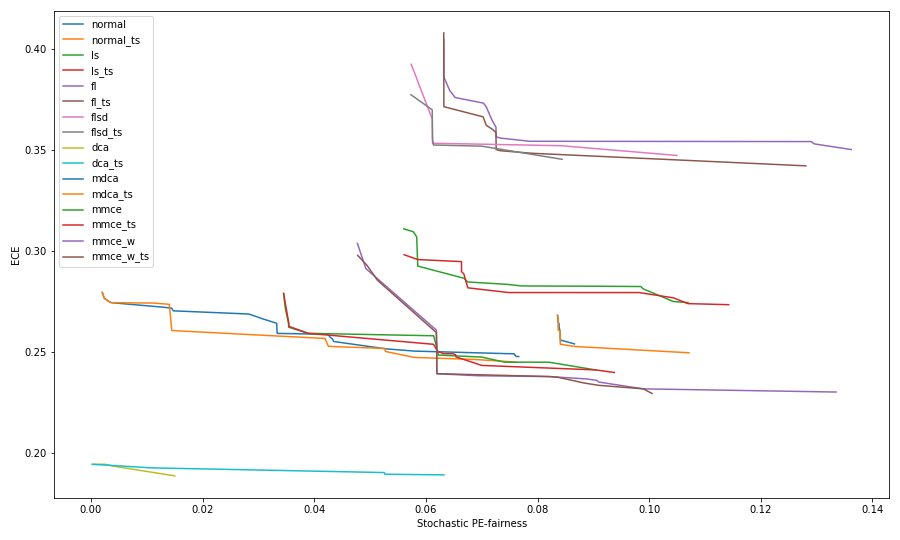}
\label{fig:lawschool_hybrid}
\end{minipage}
\end{figure}

\FloatBarrier
\subsection{Code snippets}
We write all our code in Tensorflow 2. The following code is used to compute the group-wise calibration loss for {\sf LS}, {\sf FL} and {\sf FLSD}.\\
\begin{lstlisting}
def groupwise_calibration_common_loss(
    calibration_loss: Callable[[tf.Tensor, tf.Tensor], tf.Tensor],
    rho: float,
    y: tf.Tensor,
    logits: tf.Tensor,
    a: tf.Tensor,
) -> tf.Tensor:
  y_maj = tf.gather(y, tf.where(a)[:, 0], axis=0)
  y_min = tf.gather(y, tf.where(1 - a)[:, 0], axis=0)
  logits_maj = tf.gather(logits, tf.where(a)[:, 0], axis=0)
  logits_min = tf.gather(logits, tf.where(1 - a)[:, 0], axis=0)
  return rho * calibration_loss(y_maj, logits_maj) + (1 - rho) * calibration_loss(
      y_min, logits_min
  )
\end{lstlisting}
The following code is used to compute group-wise calibration loss for {\sf DCA} and {\sf MDCA}.\\
\begin{lstlisting}
def groupwise_caliberation_separate(
    training_loss: tfk.losses.Loss,
    calibration_loss: Callable[[tf.Tensor, tf.Tensor], tf.Tensor],
    lbd: float,
    rho: float,
    y: tf.Tensor,
    logits: tf.Tensor,
    a: tf.Tensor,
) -> tf.Tensor:
  """
  Pass Sparse Categorical cross entropy with mean reduction as training loss.
  """
  y_maj = tf.gather(y, tf.where(a)[:, 0], axis=0)
  y_min = tf.gather(y, tf.where(1 - a)[:, 0], axis=0)
  logits_maj = tf.gather(logits, tf.where(a)[:, 0], axis=0)
  logits_min = tf.gather(logits, tf.where(1 - a)[:, 0], axis=0)
  return training_loss(y, logits) + lbd * (
      rho * calibration_loss(y_maj, logits_maj) + (1 - rho) * calibration_loss(y_min, logits_min)
  )
\end{lstlisting}
We implement group-wise versions of {\sf MMCE} and {\sf MMCE-W} separately. They are as shown below.\\
\begin{lstlisting}
def laplacian_kernel(r_1: tf.Tensor, r_2: tf.Tensor, gamma: float = 0.2) -> tf.Tensor:
  return tf.exp(-1.0 * tf.abs(r_1[tf.newaxis, :] - r_2[:, tf.newaxis]) / (2 * gamma))


def groupwise_mmce(
    training_loss: tfk.losses.Loss,
    lbd: float,
    rho: float,
    y: tf.Tensor,
    logits: tf.Tensor,
    a: tf.Tensor
) -> tf.Tensor:
  y_pred = tfk.activations.softmax(logits, axis=-1)
  c = tf.stop_gradient(tf.cast(
      tf.math.equal(y, tf.cast(tf.argmax(y_pred, axis=-1), tf.int32)), tf.float32))
  r = tf.reduce_max(y_pred, axis=-1)
  c_minus_r = c - r
  c_minus_r_maj = tf.gather(c_minus_r, tf.where(a)[:, 0], axis=0)
  c_minus_r_min = tf.gather(c_minus_r, tf.where(1 - a)[:, 0], axis=0)
  r_maj = tf.gather(r, tf.where(a)[:, 0], axis=0)
  r_min = tf.gather(r, tf.where(1 - a)[:, 0], axis=0)
  loss_1 = tf.reduce_mean((c_minus_r_maj[tf.newaxis, :] * c_minus_r_maj[:, tf.newaxis]) * laplacian_kernel(r_maj, r_maj))
  loss_2 = tf.reduce_mean((c_minus_r_min[tf.newaxis, :] * c_minus_r_min[:, tf.newaxis]) * laplacian_kernel(r_min, r_min))
  loss_3 = tf.reduce_mean((c_minus_r_maj[tf.newaxis, :] * c_minus_r_min[:, tf.newaxis]) * laplacian_kernel(r_maj, r_min))
  return training_loss(y, logits) + lbd * tf.sqrt(rho * rho * loss_1 + (1 - rho) * (1 - rho) * loss_2 + 2.0 * rho * (1 - rho) * loss_3)


def groupwise_mmce_w(
    training_loss: tfk.losses.Loss,
    lbd: float,
    rho: float,
    y: tf.Tensor,
    logits: tf.Tensor,
    a: tf.Tensor
) -> tf.Tensor:
  y_pred = tfk.activations.softmax(logits, axis=-1)
  c = tf.stop_gradient(tf.cast(
      tf.math.equal(y, tf.cast(tf.argmax(y_pred, axis=-1), tf.int32)), tf.int32))
  r = tf.reduce_max(y_pred, axis=-1)
  r_corr = tf.gather(r, tf.where(c)[:, 0], axis=0)
  r_wrong = tf.gather(r, tf.where(1 - c)[:, 0], axis=0)
  # Correct prediction loss classwise
  r_corr_maj = tf.gather(r_corr, tf.where(a)[:, 0], axis=0)
  r_corr_min = tf.gather(r_corr, tf.where(1 - a)[:, 0], axis=0)
  l1 = tf.reduce_mean(((1.0 - r_corr_maj)[tf.newaxis, :] * (1.0 - r_corr_maj)[:, tf.newaxis]) * laplacian_kernel(r_corr_maj, r_corr_maj))
  l2 = tf.reduce_mean(((1.0 - r_corr_min)[tf.newaxis, :] * (1.0 - r_corr_min)[:, tf.newaxis]) * laplacian_kernel(r_corr_min, r_corr_min))
  l3 = tf.reduce_mean(((1.0 - r_corr_maj)[tf.newaxis, :] * (1.0 - r_corr_min)[:, tf.newaxis]) * laplacian_kernel(r_corr_maj, r_corr_min))
  loss1 = rho * rho * l1 + (1 - rho) * (1 - rho) * l2 + 2.0 * rho * (1 - rho) * l3
  # Wrong prediction loss classwise
  r_wrong_maj = tf.gather(r_wrong, tf.where(a)[:, 0], axis=0)
  r_wrong_min = tf.gather(r_wrong, tf.where(1 - a)[:, 0], axis=0)
  l4 = tf.reduce_mean((r_wrong_maj[tf.newaxis, :] * r_wrong_maj[:, tf.newaxis]) * laplacian_kernel(r_wrong_maj, r_wrong_maj))
  l5 = tf.reduce_mean((r_wrong_min[tf.newaxis, :] * r_wrong_min[:, tf.newaxis]) * laplacian_kernel(r_wrong_min, r_wrong_min))
  l6 = tf.reduce_mean((r_wrong_maj[tf.newaxis, :] * r_wrong_min[:, tf.newaxis]) * laplacian_kernel(r_wrong_maj, r_wrong_min))
  loss2 = rho * rho * l4 + (1 - rho) * (1 - rho) * l5 + 2.0 * rho * (1 - rho) * l6
  # One correct and one incorrect classwise
  l7 = tf.reduce_mean(((1 - r_corr_maj)[tf.newaxis, :] * r_wrong_maj[:, tf.newaxis]) * laplacian_kernel(r_corr_maj, r_wrong_maj))
  l8 = tf.reduce_mean(((1 - r_corr_min)[tf.newaxis, :] * r_wrong_min[:, tf.newaxis]) * laplacian_kernel(r_corr_min, r_wrong_min))
  l9 = tf.reduce_mean(((1 - r_corr_maj)[tf.newaxis, :] * r_wrong_min[:, tf.newaxis]) * laplacian_kernel(r_corr_maj, r_wrong_min))
  l10 = tf.reduce_mean(((1 - r_corr_min)[tf.newaxis, :] * r_wrong_maj[:, tf.newaxis]) * laplacian_kernel(r_corr_min, r_wrong_maj))
  loss3 = rho * rho * l7 + (1 - rho) * (1 - rho) * l8 + rho * (1 - rho) * l9 + rho * (1 - rho) * l10
  return training_loss(y, logits) + lbd * tf.sqrt(loss1 + loss2 - 2.0*loss3)
\end{lstlisting}

\end{document}